\newtheorem{theorem}{Theorem}
\newtheorem{proposition}{Proposition}
\definecolor{myred}{RGB}{214, 8, 59}
\definecolor{myblue}{RGB}{0, 115, 207}
\definecolor{mygreen}{RGB}{110,180,63}
\definecolor{myrgb}{RGB}{108,101,110}
\definecolor{mygb}{RGB}{55, 148, 135}
\definecolor{myrb}{RGB}{107, 62, 133}
\begin{document}

%
\runningtitle{Rethinking Inter-LoRA Orthogonality in Adapter Merging}

%

\twocolumn[

\aistatstitle{Rethinking Inter-LoRA Orthogonality in Adapter Merging:\\ Insights from Orthogonal Monte Carlo Dropout}

\aistatsauthor{ Andi Zhang$^1$~~~~Xuan Ding$^2$~~~~Haofan Wang$^3$~~~~Steven McDonagh$^4$~~~~Samuel Kaski$^{1,5}$ }

\aistatsaddress{ $^1$University of Manchester~~~~$^2$The Chinese University of Hong Kong, Shenzhen\\$^3$InstantX Team~~~~$^4$University of Edinburgh~~~~$^5$Aalto University } ]

\begin{abstract}
We propose Orthogonal Monte Carlo Dropout, a mechanism that enforces strict orthogonality when combining sparse semantic vectors without extra time complexity. Low-Rank Adaptation (LoRA), a popular fine-tuning method for large models, typically trains a module to represent a specific concept such as an object or a style. When multiple LoRA modules are merged, for example to generate an object in a particular style, their outputs (semantic vectors) may interfere with each other. Our method guarantees that merged LoRA modules remain orthogonal and thus free from direct interference. However, empirical analysis reveals that such orthogonality does not lead to the semantic disentanglement highlighted in prior work on compositional adaptation. This finding suggests that inter-LoRA orthogonality alone may be insufficient for achieving true semantic compositionality, prompting a re-examination of its role in adapter merging. Code can be found at \url{https://github.com/andiac/Orth_MC_Dropout}
\end{abstract}

\section{INTRODUCTION}
\label{sec:intro}
Low-Rank Adaptation (LoRA) \citep{hu2022lora} has become a popular method for adapting large-scale models efficiently by injecting trainable low-rank matrices while keeping pre-trained weights frozen. Beyond single-task adaptation, practitioners often merge multiple LoRA modules (hereafter referred to simply as \textbf{LoRAs}) \citep{huang2023lorahub, zhong2024multi, wang2024lora, prabhakar2024lora, po2024orthogonal} --- e.g., combining an object identity with a visual style --- for compositional generation. However, prior studies \citep{shah2023ziplora} assume that such merging may suffer from interference when LoRA modules are highly aligned, limiting semantic controllability.

Existing approaches such as Zip-LoRA \citep{shah2023ziplora} attempt to address this issue by penalizing similarity between LoRA weights, but they neither enforce strict orthogonality nor avoid the need for additional fine-tuning. To overcome these limitations, we propose \textbf{Orthogonal Monte Carlo (MC) Dropout}, a simple yet theoretically grounded mechanism that enforces orthogonality among LoRAs without additional computational overhead. Our analysis proves that the merged updates remain orthogonal at runtime, effectively eliminating direct interference.

Interestingly, experiments reveal that strict orthogonality alone does not yield the semantic disentanglement assumed in earlier compositional adaptation work \citep{shah2023ziplora}. This finding highlights a gap between structural orthogonality and true semantic compositionality, motivating a rethinking of the role of inter-LoRA orthogonality and the wider challenge of successful adapter merging.

\begin{figure*}[thb]
  \begin{center}
      \input{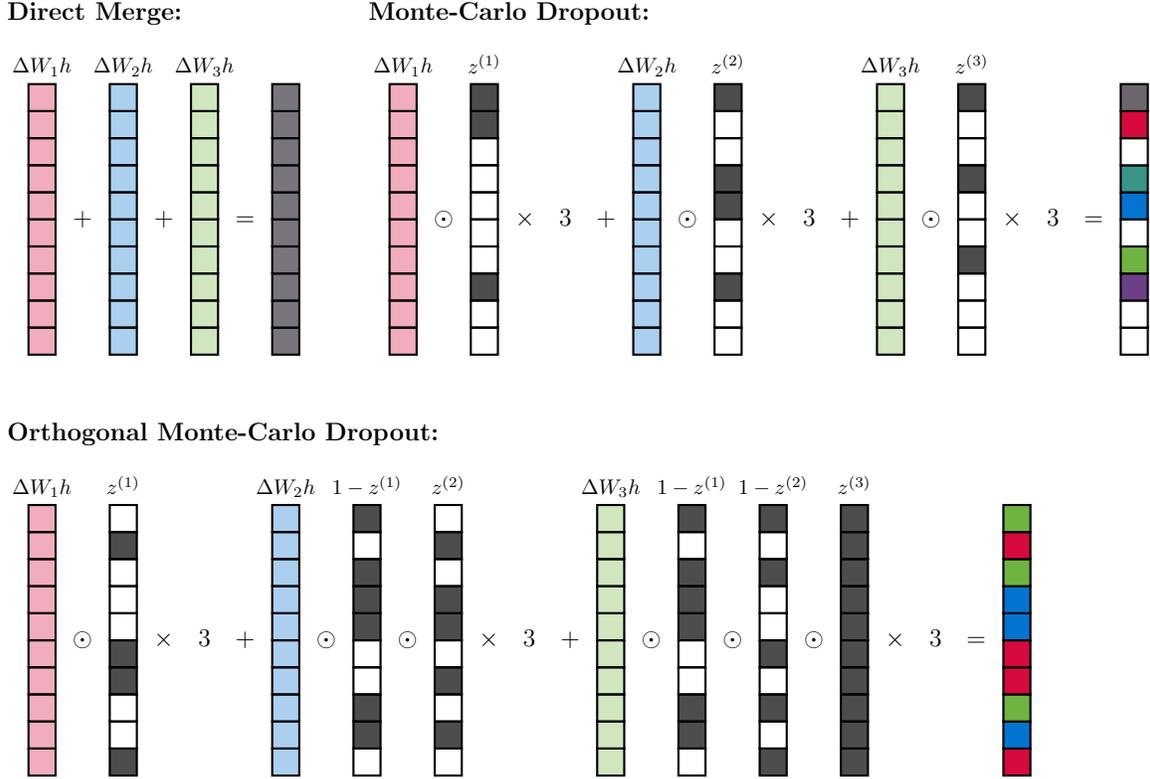}
  
      
  \end{center}

  \caption{Comparison of LoRA merging strategies. In this example, the input is $h$, and the outputs of the LoRA modules are $\Delta W_1 h, \Delta W_2 h$, and $\Delta W_3 h$. The masks $z^{(1)}, z^{(2)}, z^{(3)}$ are generated according to different merging strategies; in the figure, dark gray indicates 1 and white indicates 0. For simplicity, we set $w_i = 1$ and $p_i = 1/3$ for $i \in {1,2,3}$. \textbf{(Top Left)} Direct Merge simply sums the LoRA updates, which may lead to interference when updates are aligned. \textbf{(Top Right)} Monte Carlo Dropout merge (Algorithm~\ref{alg:dropout} and Algorithm~\ref{alg:merging}). \textbf{(Bottom)} Orthogonal Monte Carlo Dropout (Algorithm~\ref{alg:orthdropout} and Algorithm~\ref{alg:merging}), which enforces orthogonality among LoRA contributions and ensures no direct overlap.}
  \label{fig:figure}
  \end{figure*}

\noindent\textbf{Our main contributions are as follows:}
\begin{itemize}
    \item We propose \textbf{Orthogonal Monte Carlo Dropout}, a novel mechanism that enforces strict orthogonality when merging LoRAs, without incurring extra computational cost.
    \item We provide \textbf{theoretical guarantees} of runtime orthogonality.
    \item We empirically demonstrate the \textbf{redundancy} of LoRAs, which enables our method to be applied to any pre-trained LoRA from the community. 
    \item Through empirical analysis, we demonstrate that \textbf{orthogonality does not necessarily imply semantic disentanglement or compositionality}, challenging a key assumption in prior compositional adaptation work and prompting a rethinking of inter-LoRA orthogonality in adapter merging.
\end{itemize}

\section{PRELIMINARIES}

\subsection{Monte-Carlo Dropout}
Monte-Carlo (MC) Dropout \citep{gal2016dropout} leverages dropout \citep{srivastava2014dropout}, a regularization method that randomly drops units in a neural network. In MC Dropout, dropout is applied during both training and inference stages. By performing multiple forward passes through the network with dropout enabled, it generates a distribution of outputs. This distribution allows for the calculation of predictive uncertainty, providing insights into the confidence of the model's predictions. 

Let $W \in \mathbb{R}^{d_\text{out} \times d_\text{in}}$ be the parameters of a linear layer, $p$ be the dropout rate, and $h \in \mathbb{R}^{d_\text{in}}$ be the input to the layer. We define a dropout mask $z = [z_1, \dots, z_{d_\text{out}}]^T$, where each $z_i \sim \text{Ber}(1-p)$ for $i \in \{1, \dots, d_\text{out}\}$. Then, the feed-forward function $f(h)$ can be expressed as:
\begin{equation}
\label{eq:monte}
    f(h) = \frac{(Wh) \odot z}{1-p},
\end{equation}
where $\odot$ denotes element-wise multiplication. Note that the denominator $1-p$ rescales the output to compensate for the expected reduction in magnitude caused by dropout. 

In this work, we use MC Dropout not for uncertainty estimation, but for its redundancy to maintain semantic representations even when most of the neurons are masked (Section~\ref{sec:redundancy}).

\subsection{Low-Rank Adaptation (LoRA)}
Low-Rank Adaptation \citep{hu2022lora} (LoRA) is a technique that involves freezing the pre-trained model weights and injecting trainable rank decomposition matrices into each layer of the attention mechanism. This approach significantly reduces the number of trainable parameters required for downstream tasks. 

Let $W$ represent the pre-trained parameters of a linear layer within an attention mechanism, and let $h$ be the input to this linear layer. A pair of LoRA parameters, $\Delta W_a$ and $\Delta W_b$ (the trainable rank decomposition matrices), operate as follows:
\[f(h) = Wh + \Delta W_b \Delta W_a h .\]
Since the multiplication of $\Delta W_a$ and $\Delta W_b$ forms a low-rank matrix, we define $\Delta W = \Delta W_b \Delta W_a$. Thus, the operation simplifies to:
\[f(h) = Wh + \Delta W h.\]
During the training stage, $W$ is kept fixed while $\Delta W$ is tuned to maximize the likelihood on the fine-tuning dataset. 

\subsection{Merging LoRAs}
A common method to merge two LoRAs, $\Delta W_1$ and $\Delta W_2$, is through a weighted sum:
\begin{equation}
\label{eq:directmerge}
f(h) = Wh + w_1 \Delta W_1 h + w_2 \Delta W_2 h,
\end{equation}
where $w_1$ and $w_2$ are the respective weights. The work of \citet{shah2023ziplora} indicates that highly aligned LoRA weights merge poorly. This can be understood straightforwardly: if the vectors $\Delta W_1 h$ and $\Delta W_2 h$ point in similar directions, the addition operation will cause them to interfere with each other, ultimately diminishing the effectiveness of the sum.

\section{ORTHOGONAL MONTE-CARLO DROPOUT}
\label{sec:orthmcdropout}
\subsection{Definitions}
We propose \textbf{Orthogonal Monte-Carlo (MC) Dropout} to mitigate interference between LoRAs. Let $W$ be the pre-trained parameters, and $\Delta W_1$ and $\Delta W_2$ be two LoRAs with corresponding dropout rates $p_1$ and $p_2$, where $1-p_1 + 1-p_2 \leq 1$. The feed-forward function $f(h)$ is defined as:
\[
\resizebox{\linewidth}{!}{$
f(h)= Wh + \frac{w_1 z^{(1)}\odot (\Delta W_1 h)}{1-p_1} + \frac{w_2 (\mathbf{1} - z^{(1)})\odot z^{(2)}\odot (\Delta W_2 h)}{1-p_2},
$}
\]
where $z^{(1)}_i{\sim}Ber(1-p_1)$, $z^{(2)}_i{\sim}Ber((1- p_2) / p_1)$ for $i\in \{1, \dots , d_\text{out}\}$. By construction, interference between the two LoRAs is eliminated, since the inner product of $z^{(1)} \odot (\Delta W_1 h)$ and $(\mathbf{1} - z^{(1)}) \odot z^{(2)} \odot (\Delta W_2 h)$ is zero. For consistency, note that $(1 - z^{(1)}_i) \cdot z^{(2)}_i \sim \text{Ber}(1 - p_2)$.


\noindent This idea can be extended to the case of merging $k$ LoRAs with dropout probabilities $p_1, \dots, p_k$, under the constraint that $\sum_{j=1}^k (1 - p_j) \leq 1$. For notational simplicity, we assume $w_1 = w_2 = \cdots = w_k = 1$ throughout the following derivation. Under this setting, the feed-forward function $f(h)$ is defined as:
\[
f(h) = Wh + \sum_{j=1}^{k} \frac{\left(\prod_{l=1}^{j-1} (\mathbf{1} - z^{(l)}) \right) \odot z^{(j)} \odot (\Delta W_j h)}{1-p_j},
\]
where $z^{(1)}_i \sim \text{Ber}(1-p_1)$ and each $$z^{(j)}_i \sim \text{Ber}\left((1-p_j)/\left(\left(\sum_{l=1}^{j-1} p_l\right) - (j-2)\right)\right)$$ for $i \in \{1, \dots, d_\text{out}\}$, $j \in \{2, \dots, k\}$. 

\subsection{Consistency and Orthogonality}
In this subsection, we demonstrate the \textbf{consistency} and \textbf{orthogonality} of our Orthogonal MC Dropout.
\begin{theorem}{\textbf{(Consistency)}}
    Let the random mask 
    \begin{equation}
      \label{eq:mask}
      m^{(j)} = \left( \prod_{l=1}^{j-1} (\mathbf{1} - z^{(l)}) \right) \odot z^{(j)}
    \end{equation}, then $m^{(j)}_i \sim \text{Ber}(1-p_j)$.
\end{theorem}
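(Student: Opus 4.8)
The plan is to reduce the statement to a single probability computation and then exploit a telescoping structure. First, observe that $m^{(j)}_i = \left(\prod_{l=1}^{j-1}(1 - z^{(l)}_i)\right) z^{(j)}_i$ is a product of $\{0,1\}$-valued random variables, hence itself $\{0,1\}$-valued; so it is automatically Bernoulli and the only thing to verify is that its success parameter equals $1-p_j$. Fixing a coordinate $i$ and treating the masks $z^{(1)}, \dots, z^{(j)}$ as mutually independent, $m^{(j)}_i = 1$ holds exactly when $z^{(l)}_i = 0$ for every $l < j$ and $z^{(j)}_i = 1$. Independence then gives
\[
P\!\left(m^{(j)}_i = 1\right) = \left(\prod_{l=1}^{j-1} P\!\left(z^{(l)}_i = 0\right)\right) P\!\left(z^{(j)}_i = 1\right),
\]
so the task is to evaluate this product.

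The key step is an algebraic reformulation of the prescribed denominators. I would introduce $D_j := 1 - \sum_{l=1}^{j-1}(1-p_l)$, with the convention $D_1 = 1$, and check that the denominator appearing in the definition of $z^{(j)}$ satisfies $\left(\sum_{l=1}^{j-1} p_l\right) - (j-2) = 1 - \sum_{l=1}^{j-1}(1-p_l) = D_j$. Consequently $P(z^{(j)}_i = 1) = (1-p_j)/D_j$ and, using $D_{j+1} = D_j - (1-p_j)$,
\[
P\!\left(z^{(j)}_i = 0\right) = 1 - \frac{1-p_j}{D_j} = \frac{D_j - (1-p_j)}{D_j} = \frac{D_{j+1}}{D_j}.
\]
This identity also covers the base index $l=1$: since $P(z^{(1)}_i = 0) = p_1 = D_2$ and $D_1 = 1$, it equals $D_2/D_1$ as well.

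Substituting these failure probabilities into the product makes it telescope:
\[
\prod_{l=1}^{j-1} P\!\left(z^{(l)}_i = 0\right) = \prod_{l=1}^{j-1} \frac{D_{l+1}}{D_l} = \frac{D_j}{D_1} = D_j.
\]
Multiplying by $P(z^{(j)}_i = 1) = (1-p_j)/D_j$ cancels the $D_j$ factor and yields $P(m^{(j)}_i = 1) = 1-p_j$, which is exactly the claim. The constraint $\sum_{l}(1-p_l) \le 1$ ensures each $D_j \ge 0$, so all the Bernoulli parameters are well-defined in $[0,1]$ and no division-by-zero issues arise.

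The main obstacle is not probabilistic but algebraic: spotting the substitution $D_j = 1 - \sum_{l=1}^{j-1}(1-p_l)$ that turns the opaque nested denominators $\left(\sum_{l=1}^{j-1}p_l\right)-(j-2)$ into consecutive ratios $D_{j+1}/D_j$ that telescope. Everything else — the reduction to a single event, the use of independence to factor, and the final cancellation — is routine once that reformulation is in place.
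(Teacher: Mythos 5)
Your proof is correct and follows essentially the same route as the paper's: factor $\mathbb{P}(m^{(j)}_i=1)$ into $\prod_{l<j}\mathbb{P}(z^{(l)}_i=0)\cdot\mathbb{P}(z^{(j)}_i=1)$ by independence and let the product telescope. Your $D_j = 1-\sum_{l=1}^{j-1}(1-p_l)$ notation is merely a cleaner packaging of the same cancellation the paper writes out explicitly with nested fractions, with the added (correct) observation that the constraint $\sum_l(1-p_l)\leq 1$ keeps all Bernoulli parameters well-defined.
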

\begin{proof}
Let $i \in \{1, \dots, d_\text{out}\}$, $j \in \{1, \dots, k\}$,
    \begin{align*}
        &\mathbb{P}(m^{(j)}_i = 1)\\ 
        =& \mathbb{P}\left(\left( \prod_{l=1}^{j-1} (1 - z^{(l)}_i) \right) \cdot z^{(j)}_i = 1\right)\\
        =& \mathbb{P}(z^{(1)}_i=0, \dots z^{(j-1)}_i=0, z^{(j)}_i = 1)\\
        =& \mathbb{P}(z^{(1)}_i=0)\cdots \mathbb{P}(z^{(j-1)}_i=0)\mathbb{P}(z^{(j)}_i = 1)\\
        =& p_1 \left( 1- \frac{1-p_2}{p_1} \right) \left( 1-\frac{1-p_3}{p_1 + p_2 -1} \right)\cdots\\
        &\left( 1 - \frac{1-p_{j-1}}{\left(\sum_{l=1}^{j-2}p_l\right)- (j-3)} \right) \frac{1-p_j}{\left(\sum_{l=1}^{j-1}p_l\right)-(j-2)}\\
        =&p_1\frac{p_1 + p_2 - 1}{p_1}\cdot\frac{p_1+p_2+p_3-2}{p_1+p_2-1}\cdots\\
        &\frac{\left(\sum_{l=1}^{j-1}p_l\right)-(j-2)}{\left(\sum_{l=1}^{j-2}p_l\right)- (j-3)}\cdot \frac{1-p_j}{\left(\sum_{l=1}^{j-1}p_l\right)-(j-2)}\\
        =& 1- p_j
    \end{align*}
\end{proof}
\noindent To facilitate the proof of orthogonality, we begin with the following proposition:
\begin{proposition}
    Let $m, n \in \{0, 1\}^d$ for some dimension $d$, such that $m^T n = 0$. Then for any $a, b\in \mathbb{R}^d$, we have $(m\odot a)^T(n\odot b) = 0$.
\end{proposition}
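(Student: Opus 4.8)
The plan is to reduce the vector statement to a coordinatewise one, exploiting that $m$ and $n$ have binary entries. First I would expand the hypothesis as $m^T n = \sum_{i=1}^d m_i n_i = 0$. Since each $m_i, n_i \in \{0,1\}$, every summand $m_i n_i$ is a nonnegative integer, and a sum of nonnegative terms vanishes only if each term vanishes; hence $m_i n_i = 0$ for all $i$. Equivalently, $m$ and $n$ have \emph{disjoint supports}: no coordinate $i$ satisfies $m_i = 1$ and $n_i = 1$ simultaneously.

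Next I would expand the target inner product directly, pulling the Hadamard products into the summand:
\[
(m \odot a)^T (n \odot b) = \sum_{i=1}^d (m_i a_i)(n_i b_i) = \sum_{i=1}^d (m_i n_i)(a_i b_i).
\]
Substituting the coordinatewise identity $m_i n_i = 0$ established above makes every summand zero, so the entire sum is zero, which is exactly the claim.

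There is no genuine obstacle here; the proof uses only that a product of two $\{0,1\}$ numbers is $0$ unless both equal $1$, together with nonnegativity to pass from the scalar hypothesis to the coordinatewise one. The single subtlety worth stating explicitly is why $m^T n = 0$ forces $m_i n_i = 0$ for each $i$ individually rather than merely in aggregate: this is precisely the nonnegativity argument, and it would fail for arbitrary real vectors, so the binary assumption on $m$ and $n$ is essential. Once this proposition is available, orthogonality of the Orthogonal MC Dropout contributions follows by applying it with $m = m^{(j)}$ and $n = m^{(j')}$ the masks of Equation~\eqref{eq:mask}: for $j < j'$ the mask $m^{(j)}$ carries the factor $z^{(j)}$ while $m^{(j')}$ carries the factor $(\mathbf{1} - z^{(j)})$, which forces their supports to be disjoint and hence $(m^{(j)})^T m^{(j')} = 0$, supplying the hypothesis of the proposition.
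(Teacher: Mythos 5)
Your proof is correct and takes essentially the same route as the paper's: both deduce $m_i n_i = 0$ coordinatewise from $m^T n = 0$ via nonnegativity of the binary products, then expand $(m\odot a)^T(n\odot b) = \sum_i m_i n_i a_i b_i$ term by term. Your closing remark on how the proposition feeds into the orthogonality theorem also matches the paper's subsequent argument.
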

\begin{proof}
As $m$ and $n$ only take values 0 or 1, $m_i n_i = 0$ or $1$ for all $i$. 
Given $m^T n = m_1 n_1 + \dots + m_d n_d = 0$, we must have $m_i n_i = 0$ for all $i$ (if any $m_i n_i = 1$, then $m^T n > 0$). Therefore:
\begin{align*}
(m\odot a)^T(n\odot b) &= m_1a_1n_1b_1 + \dots  m_da_dn_db_d\\&= 0 + \dots + 0 = 0
\end{align*}
\end{proof}
\noindent Then, we prove orthogonality:
\begin{theorem}{\textbf{(Orthogonality)}}
For any \( i, j \in \{1, \dots, k\} \) with \( i < j \), the vectors 
\[ m^{(j)} \odot (\Delta W_j h) = \left( \prod_{l=1}^{j-1} (\mathbf{1} - z^{(l)}) \right) \odot z^{(j)} \odot (\Delta W_j h) \]
and
\[ m^{(i)} \odot (\Delta W_i h) =  \left( \prod_{l=1}^{i-1} (\mathbf{1} - z^{(l)}) \right) \odot z^{(i)} \odot (\Delta W_i h) \]
are orthogonal, where \( h\in \mathbb{R}^{d_\text{in}}\) is an input vector and \(\odot\) denotes element-wise multiplication.    
\end{theorem}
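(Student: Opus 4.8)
The plan is to reduce the statement to Proposition~1, which already handles the passage from orthogonal binary masks to orthogonal masked vectors. Concretely, it suffices to show that the two $\{0,1\}$-valued masks $m^{(i)}$ and $m^{(j)}$ are themselves orthogonal, i.e.\ $\left(m^{(i)}\right)^T m^{(j)} = 0$; once this is established, applying Proposition~1 with $a = \Delta W_i h$ and $b = \Delta W_j h$ immediately yields $\left(m^{(i)} \odot (\Delta W_i h)\right)^T \left(m^{(j)} \odot (\Delta W_j h)\right) = 0$, which is exactly the claim. So the real work is entirely on the deterministic support structure of the masks.

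To prove $\left(m^{(i)}\right)^T m^{(j)} = 0$, I would argue coordinate-wise and show $m^{(i)}_t m^{(j)}_t = 0$ for every $t \in \{1, \dots, d_\text{out}\}$. Expanding the definition in \eqref{eq:mask}, having $m^{(i)}_t = 1$ forces $z^{(i)}_t = 1$, whereas having $m^{(j)}_t = 1$ forces $z^{(l)}_t = 0$ for all $l \le j-1$. Since $i < j$ gives $i \le j-1$, the complementary factor $(1 - z^{(i)}_t)$ appears in the product defining $m^{(j)}_t$, so $m^{(j)}_t = 1$ in particular requires $z^{(i)}_t = 0$. The two requirements $z^{(i)}_t = 1$ and $z^{(i)}_t = 0$ are contradictory, so $m^{(i)}_t$ and $m^{(j)}_t$ cannot both equal $1$; hence their product vanishes for every $t$, and summing over $t$ gives $\left(m^{(i)}\right)^T m^{(j)} = 0$.

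The crux — and the only place where the hypothesis $i < j$ enters — is the observation that the index $i$ lies in the range $\{1, \dots, j-1\}$ over which the product in $m^{(j)}$ runs, so that $m^{(j)}$ carries the factor $(1 - z^{(i)})$ complementary to the factor $z^{(i)}$ carried by $m^{(i)}$. I expect no genuine obstacle beyond keeping the LoRA indices $i, j$ distinct from the coordinate index $t$ and verifying this nesting of the mask supports carefully. Notably, the Bernoulli parameters of the $z^{(l)}$ play no role at all: orthogonality is a deterministic, pointwise consequence of the mask construction and therefore holds for every realization of the random masks, not merely in expectation.
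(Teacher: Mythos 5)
Your proposal is correct and follows essentially the same route as the paper: both arguments hinge on the complementarity of $z^{(i)}$ and $\mathbf{1}-z^{(i)}$ (which is exactly where $i<j$ is used) and both invoke Proposition~1 to pass from disjoint binary masks to orthogonal masked vectors. The only cosmetic difference is that the paper instantiates Proposition~1 with $m=\mathbf{1}-z^{(i)}$, $n=z^{(i)}$ and folds the remaining factors into $a$ and $b$, whereas you first establish $\left(m^{(i)}\right)^T m^{(j)}=0$ coordinate-wise and then apply Proposition~1 to the full masks; both are valid and equally rigorous.
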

\begin{proof}
    Each element of $z^{(i)}$ is sampled from a Bernoulli distribution, taking only values 0 or 1. 
Consequently, for all $j$:
\[(1 - z_j^{(i)}) \cdot z_j^{(i)} = 0\]
Therefore:
\[(\mathbf{1} - z^{(i)})^T z^{(i)} = 0\]
Then, the inner product of the two vectors
    \begin{align*}
        & (m^{(j)} \odot (\Delta W_j h))^T  (m^{(i)} \odot (\Delta W_i h))\\ 
        =& \left( \left( \prod_{l=1}^{j-1} (\mathbf{1} - z^{(l)}) \right) \odot z^{(j)} \odot (\Delta W_j h)\right)^T\\
        &\left( \left( \prod_{l=1}^{i-1} (\mathbf{1} - z^{(l)}) \right) \odot z^{(i)} \odot (\Delta W_i h)\right)\\
        =& \left( (\mathbf{1} - z^{(i)}) \odot \left( \prod_{l=1, l\neq i}^{j-1} (\mathbf{1} - z^{(l)}) \right) \odot z^{(j)} \odot (\Delta W_j h) \right)^T\\
        &\left( z^{(i)} \odot \left( \prod_{l=1}^{i-1} (\mathbf{1} - z^{(l)}) \right) \odot (\Delta W_i h)\right)\\
        =& 0
    \end{align*}
    by proposition 1.
\end{proof}

\subsection{Algorithms}
\label{sec:algo}

Based on the preceding definitions and theorems, we now present the corresponding algorithms. Algorithm~\ref{alg:dropout} provides the vectorized implementation of Monte-Carlo Dropout, which directly corresponds to the formulation in (\ref{eq:monte}). Algorithm~\ref{alg:orthdropout} introduces the Orthogonal Monte-Carlo Dropout. Here, the variable \textit{acc} (short for accumulation) is used to store intermediate results, since the mask $m^{(j)}$ in (\ref{eq:mask}) includes the product $\prod_{l=1}^{j-1} (\mathbf{1} - z^{(l)})$; caching this product prevents redundant computation. Finally, Algorithm~\ref{alg:merging} describes how to merge the outputs of the pre-trained parameters and LoRAs. The function $g$ is a preprocessing transform applied to the vectors before merging (it does not perform the merge itself): for direct merging, $g$ is the identity; for Monte-Carlo Dropout, $g$ is implemented by Algorithm~\ref{alg:dropout}; for Orthogonal Monte-Carlo Dropout, $g$ is implemented by Algorithm~\ref{alg:orthdropout}.


As shown in Algorithm~\ref{alg:dropout} and Algorithm~\ref{alg:orthdropout}, the complexity of our method scales linearly with the number of merged LoRAs rather than with the input size. Since this number rarely exceeds 10 in practice, the overhead is negligible.

\begin{algorithm}[ht]
   \caption{Monte-Carlo Dropout}
   \label{alg:dropout}
\begin{algorithmic}
   \STATE {\bfseries Input:} Input vectors $\{x_1, x_2, \dots, x_k\}$, dropout probabilities $\{p_1, p_2, \dots, p_k\}$
   \STATE {\bfseries Output:} Output vectors $\{y_1, y_2, \dots, y_k\}$
   \FOR{$i = 1$ {\bfseries to} $k$}
      \STATE Sample $z_i \sim \mathrm{Bernoulli}(1 - p_i)$
      \STATE $y_i \gets z_i \odot x_i / (1 - p_i)$
   \ENDFOR
   \STATE \textbf{return} $\{y_1, y_2, \dots, y_k\}$
\end{algorithmic}
\end{algorithm}

\begin{algorithm}[ht]
   \caption{Orthogonal Monte-Carlo Dropout}
   \label{alg:orthdropout}
\begin{algorithmic}
   \STATE {\bfseries Input:} Input vectors $\{x_1, x_2, \dots, x_k\}$, dropout probabilities $\{p_1, p_2, \dots, p_k\}$ with $\sum_{i=1}^k (1 - p_i) \leq 1$
   \STATE {\bfseries Output:} Output vectors $\{y_1, y_2, \dots, y_k\}$
   \STATE Initialize $acc \gets \mathbf{1}$ \hfill // all-ones mask
   \FOR{$j = 1$ {\bfseries to} $k$}
      \IF{$j = 1$}
         \STATE Sample $z^{(1)} \sim \mathrm{Bernoulli}(1 - p_1)$
      \ELSE
         \STATE Sample $z^{(j)} \sim \mathrm{Bernoulli}\!\left(\frac{1 - p_j}{\sum_{l=1}^{j-1} p_l - (j-2)}\right)$
      \ENDIF
      \STATE $m^{(j)} \gets acc \odot z^{(j)}$
      \STATE $y_j \gets m^{(j)} \odot x_j / (1 - p_j)$
      \STATE $acc \gets acc \odot (1 - z^{(j)})$
   \ENDFOR
   \STATE \textbf{return} $\{y_1, y_2, \dots, y_k\}$
\end{algorithmic}
\end{algorithm}

\begin{algorithm}[ht]
   \caption{Merging the outputs of the pre-trained parameters and LoRAs (feed-forward)}
   \label{alg:merging}
\begin{algorithmic}
   \STATE {\bfseries Input:} Input vector $h$, pre-trained parameters $W$, LoRA weights $\Delta W_1, \dots,\Delta W_k$, merging weights $\{w_1, w_2, \dots, w_k\}$, preprocessing transform $g$. 
   \STATE {\bfseries Output:} Output vectors $\{y_1, y_2, \dots, y_k\}$

    \STATE $y_1, \dots y_k \gets g(\Delta W_1 h, \dots , \Delta W_k h)$

   \STATE \textbf{return} $Wh + \sum_{i=1}^k w_i y_i$
\end{algorithmic}
\end{algorithm}

\section{REDUNDANCY OF THE VECTORS}
\label{sec:redundancy}



In Monte-Carlo Dropout \citep{gal2016dropout}, the same dropout rate is applied during both training and inference to ensure that any subnetwork sampled at inference time converges in expectation. Theoretically, this implies that if the dropout rate used when training a LoRA matches the dropout rate applied during sampling, the generated images are guaranteed to preserve the semantics encoded by the LoRA.

Surprisingly, our empirical observations reveal that LoRA representations are highly redundant: even if no dropout (or only a very small dropout rate) is used during training, applying a large dropout rate at inference still produces images that retain the LoRA's intended semantics. This property significantly improves the generality of our method --- regardless of whether a LoRA has been trained with dropout, Monte-Carlo Dropout or Orthogonal Monte-Carlo Dropout can be applied at sampling time to merge it with other LoRAs.

We further support this claim with an empirical study presented in Section~\ref{sec:redundancyexp}.

\section{EXPERIMENTS}
\label{sec:exp}

In our experiments, we first empirically examine the LoRA redundancy discussed in Section~\ref{sec:redundancy}, and then compare three LoRA merging methods: \textbf{Direct Merge} (introduced in Equation (\ref{eq:directmerge})), \textbf{Monte Carlo Dropout} (introduced in Equation (\ref{eq:monte})), and our proposed \textbf{Orthogonal Monte Carlo Dropout}. A detailed illustration of the differences among these methods is provided in Figure~\ref{fig:figure}. We include Monte Carlo Dropout as a baseline to control for the effect of dropout itself, thereby isolating the impact of orthogonality. Without this comparison, it would be unclear whether the differences in generated images arise from dropout or from orthogonality.

\subsection{Image Generation Settings}
Unless otherwise specified, we use SDXL 1.0 \citep{podell2023sdxl} as the base model for image generation, producing images at a resolution of 1024$\times$1024 with 40 denoising steps, a CLIP skip of two, and a classifier-free guidance (CFG) scale of seven \citep{ho2022classifier}. Since diffusion-based image generation is highly sensitive to the initial noise \citep{guo2024initno, miao2025noise}, we fix the random seed for the initial noise across comparisons to \textbf{rigorously isolate the effects of orthogonality}.

\subsection{Preparing LoRAs}
To control experimental variables, we trained a separate LoRA for each concept in the DreamBooth dataset using identical hyperparameters. 

In addition, to demonstrate that the redundancy we highlight is a general property of LoRAs and to showcase the broad applicability of our method, we also collected a variety of pre-trained LoRAs from the community that were trained under different settings.

\subsubsection{Training LoRAs based on Dreambooth Datasets}
We use the 30 concepts provided in the DreamBooth dataset \citep{ruiz2023dreambooth}, training a separate LoRA for each concept. Each LoRA is trained with a rank of 128 using the AdamW optimizer, with a learning rate of $10^{-4}$ for the UNet and $10^{-5}$ for the text encoder. A training dropout rate of $0.05$ is applied to ensure stability.

\subsubsection{Obtaining LoRAs from the Community}
To demonstrate the generality of the redundancy and the broad applicability of our method, we also collected LoRAs from model-sharing platforms such as CivitAI and HuggingFace. Some of these LoRAs include metadata specifying their training parameters, while others do not. For detailed information, please refer to the Github repository associated with this paper\footnote{\url{https://github.com/andiac/Orth_MC_Dropout}}.

\subsection{Demonstrating the Redundancy of LoRAs}
\label{sec:redundancyexp}

We demonstrate the redundancy of LoRAs by comparing images generated from the same LoRA under different sampling dropout rates. As shown in Figure~\ref{fig:redundancydb} and Figure~\ref{fig:redundancycommunity}, varying the sampling dropout rate has little effect on the preservation of LoRA semantics. Even at a very high dropout rate (0.9)\footnote{At this rate, Orthogonal Monte-Carlo Dropout supports the merging of up to 10 LoRAs.}, the semantics encoded by the original LoRA remain well preserved, although image quality degrades noticeably. In Figure~\ref{fig:redundancydb}, all LoRAs are trained on the DreamBooth dataset with a fixed training dropout rate of 0.05. By contrast, the LoRAs in Figure~\ref{fig:redundancycommunity} are collected from the community and trained under diverse parameter settings—for example, the first row (Kimoju, a cartoon-style character) was trained with a dropout rate of 0.9, while the second row (Keqing from Genshin Impact) was trained without dropout. Despite these differences, both controlled and community-sourced LoRAs consistently exhibit strong semantic preservation across a wide range of sampling dropout rates.

\begin{figure}[tb]
  \begin{center}
      \input{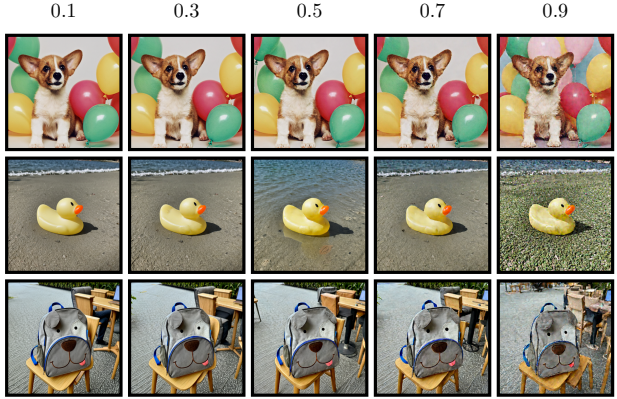}      
  \end{center}
  \caption{Empirical study of Monte-Carlo Dropout under different dropout rates (0.1 - 0.9). All LoRAs are trained on Dreambooth dataset with a training dropout rate of 0.05.}
  \label{fig:redundancydb}
\end{figure}

\begin{figure}[tb]
  \begin{center}
      \input{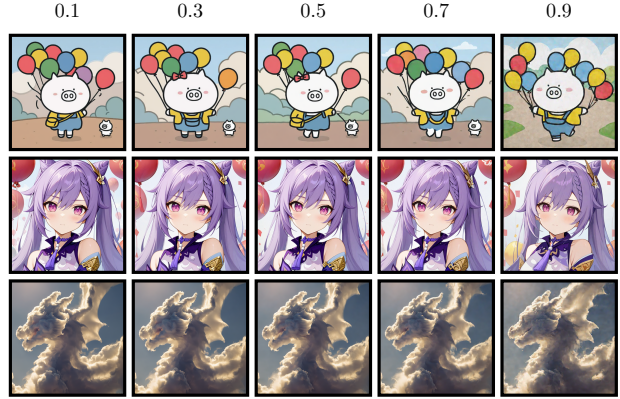}
  \end{center}
  \caption{Empirical study of Monte-Carlo Dropout under different sampling dropout rates (0.1–0.9) using LoRAs downloaded from the community. According to the metadata, the first row (cartoon character Kimoju) was trained with a dropout rate of 0.9, the second row (Keqing from Genshin Impact) was trained without dropout, and the third row (cloud style LoRA) does not disclose its training dropout rate.}
  \label{fig:redundancycommunity}
\end{figure}

\subsection{Merging LoRAs Trained on Dreambooth Datasets}
Pairwise merging is carried out independently under each of the three described LoRA merging methods: \textbf{Direct Merge}, \textbf{Monte Carlo Dropout}, and our proposed \textbf{Orthogonal Monte Carlo Dropout}, followed by evaluation of the generated images. For each pairwise combination, we set $w_1 = w_2 = 1$ and $p_1 = p_2 = 0.5$, and generate four images, yielding a total of $(30 \times 29)/2 \times 4 = 1740$ images. As shown in Figure~\ref{fig:dreambooth}, clear semantic differences among the three methods are difficult to discern by visual inspection, though we observe a noticeable decline in image quality for both Monte Carlo Dropout and Orthogonal Monte Carlo Dropout. To quantify our observations, we use the CLIP score \citep{hessel2021clipscore} to measure semantic similarity, conduct an A/B test user study to compare the fidelity of concept representation, and further assess the quality of the generated images.

In the user study, we evaluated a total of $1740 \times 3 = 5220$ cases. Each case presented the reference images of two concepts, followed by two generated images from the compared methods. Participants were asked to select which image better represented both concepts. Note that participants received no prior training; they were simply asked to make a subjective choice based on their perception. This design reflects the fact that ``semantic fusion'' and ``image quality'' are difficult to define precisely. Following the works of \citet{song2018constructing} and \citet{zhang2024constructing}, we recruited five participants and determined the outcome for each case by majority vote. When both methods received approximately 50\% of the votes, we considered the difference indistinguishable to participants.


Table~\ref{table:similarity} reports pairwise comparisons among the three merging strategies. CLIP-based semantic alignment is uniformly high across all pairs---0.9116 (Direct vs. Dropout), 0.9154 (Direct vs. Orthogonal), and 0.9346 (Dropout vs. Orthogonal)---indicating that all methods preserve concept semantics at a similar level (CLIP $>$ 0.91). The user study complements this result with preference rates based on majority vote: participants preferred Direct over Dropout (84.0\%) and Direct over Orthogonal (81.3\%), whereas Dropout vs. Orthogonal was close to parity (47.3\% vs. 52.7\%). Together, these findings suggest that perceived differences are driven primarily by image quality rather than semantics, with the near-parity between Dropout and Orthogonal consistent with their similar semantic alignment. 

Table~\ref{table:imgquality} reports five no-reference image quality assessment (IQA) metrics---MUSIQ, ARNIQA, NIMA, TRes, and DBCNN---where higher is better. Across all metrics, Direct yields the best scores, while Orthogonal is consistently (slightly) better than Dropout. This quality ordering---Direct $>$ Orthogonal $>$ Dropout---helps explain the user study: the strong preference for Direct over the other two methods and the slight edge for Orthogonal over Dropout (47.3\% vs. 52.7\%) are aligned with their measured image-quality gap.

\begin{figure}[tb]
\begin{center}
    \input{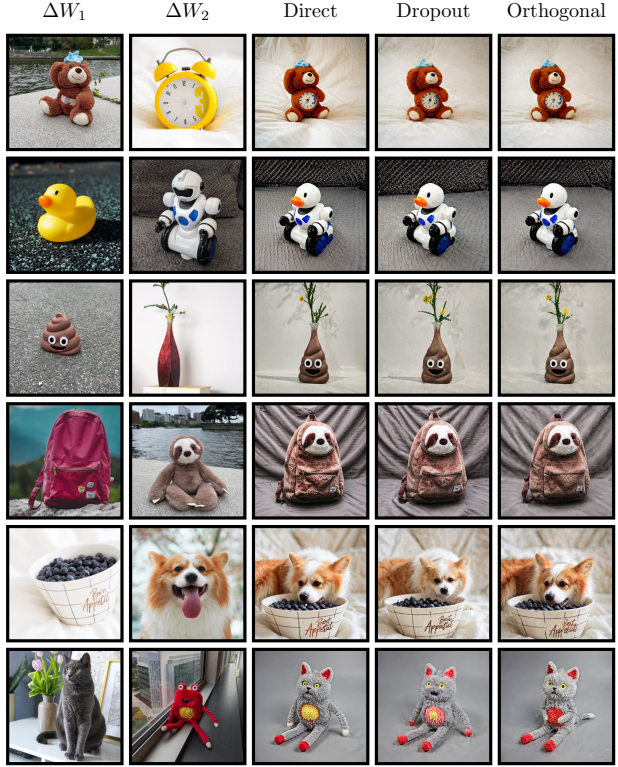}    
\end{center}
\caption{Examples of DreamBooth LoRA merges. `Direct' denotes Direct Merge, `Dropout' denotes Monte Carlo Dropout, and `Orthogonal' denotes Orthogonal Monte Carlo Dropout.}
\label{fig:dreambooth}
\end{figure}

\begin{figure}[hbt]
\begin{center}
    \input{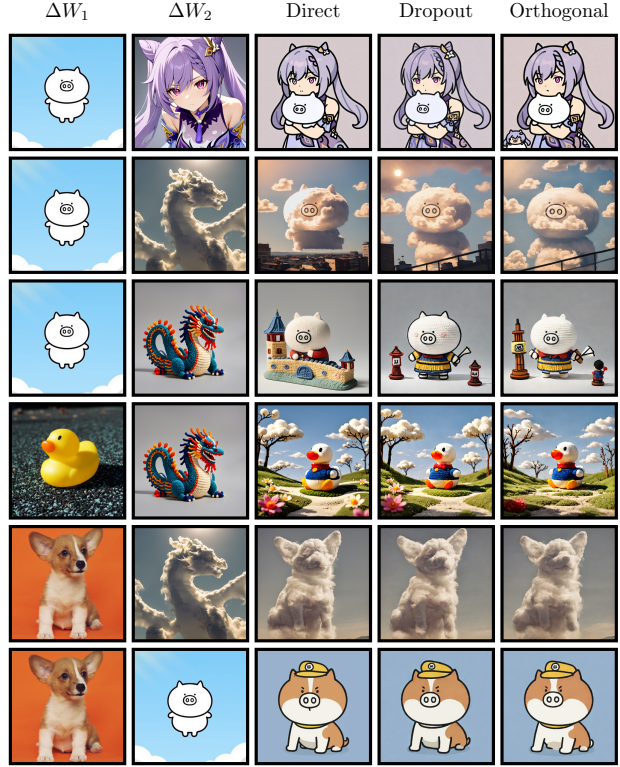}    
\end{center}
\caption{Merging LoRAs from the community. `Direct' denotes Direct Merge, `Dropout' denotes Monte Carlo Dropout, and `Orthogonal' denotes Orthogonal Monte Carlo Dropout.}
\label{fig:community}
\end{figure}

\begin{table}[hbp]
\caption{Quantatitive result on similarity and fidelity.}
\label{table:similarity}
\begin{center}
\setlength{\tabcolsep}{3.5pt}
\renewcommand{\arraystretch}{1.1}
\begin{adjustbox}{max width=\linewidth}
\begin{tabular}{lcccccc}
\toprule
\textbf{} & \textbf{Direct vs Dropout} & \textbf{Direct vs Orthogonal} & \textbf{Dropout vs Orthogonal} \\
\midrule
CLIP & 0.9116 & 0.9154 & 0.9346 \\
User Study  & 84.0\% & 81.3\% & 47.3\% \\
\bottomrule
\end{tabular}
\end{adjustbox}
\end{center}
\end{table}

\subsection{Merging LoRAs from Community Sources}
In addition to the controlled quantitative experiments presented in the previous section, Figure~\ref{fig:community} shows results from merging LoRAs downloaded from the community. Since our merging operates in the hidden space, LoRAs with different ranks can also be freely combined. These results demonstrate that our proposed method requires no additional fine-tuning and generalizes well, while further highlighting that orthogonality plays a limited role in LoRA merging.

\begin{table}[hbtp]
  \caption{Image Quality.}
  \label{table:imgquality}
  \begin{center}
  \setlength{\tabcolsep}{3.5pt}
  
  
  \renewcommand{\arraystretch}{1.1}
  \begin{adjustbox}{max width=\linewidth}
  \begin{tabular}{lcccccc}
  \toprule
  \textbf{} & \textbf{Direct} & \textbf{Dropout} & \textbf{Orthogonal} \\
  \midrule
  $\uparrow$ MUSIQ \citep{ke2021musiq} & \textbf{62.8063} & 59.5810 & 59.6875 \\
  $\uparrow$ ARNIQA \citep{agnolucci2024arniqa}  & \textbf{0.7529}  & 0.7398 & 0.7423 \\
  $\uparrow$ NIMA \citep{talebi2018nima} & \textbf{4.2824} & 4.1120  & 4.1148 \\
  $\uparrow$ TRes \citep{golestaneh2022no}  & \textbf{87.3702} & 84.0270  & 84.1839 \\
  $\uparrow$ DBCNN \citep{zhang2020blind}  & \textbf{0.6647} & 0.6419  & 0.6427 \\
  \bottomrule
  \end{tabular}
  \end{adjustbox}
  \end{center}
  \end{table}




\section{DISCUSSIONS}

\subsection{Merging LoRA weights or merging LoRA outputs?}  
Throughout this paper, we have referred to ``merging LoRA modules.'' In practice, however, our method operates by merging the \textbf{outputs} of LoRA modules. For clarity, note that directly merging the \textbf{weights} of LoRAs via a weighted sum is mathematically equivalent to merging their outputs, due to the linearity of matrix multiplication:  
$$(w_1 \Delta W_1 + w_2 \Delta W_2)h = w_1 \Delta W_1 h + w_2 \Delta W_2 h.$$  

\subsection{Controlled Variables}  
In the three methods illustrated in Figure~\ref{fig:figure}, one key controlled variable is the expected scale of the combined vectors. As shown in (\ref{eq:monte}), although $z \sim \text{Ber}(1-p)$ reduces the expected scale by a factor of $(1-p)$, dividing by $(1-p)$ restores the expectation to 1. Consequently, as demonstrated in our redundancy experiments, even with dropout masking 90\% of the LoRA activations, controlling the expected scale ensures that semantic preservation remains intact.  

Another critical controlled variable is the \textbf{initial noise} in diffusion-based image generation. As noted by \citet{guo2024initno} and \citet{miao2025noise}, the initial noise has a substantial impact on the final image. To the best of our knowledge, our study is the only one in this line of work that strictly controls the initial noise, which is essential for isolating and evaluating the true effect of orthogonality on semantic preservation.

\subsection{Interference as a Source of Image Quality Gains}

Direct Merge achieves substantially higher image quality than the other methods, indicating that the interactions arising from directly adding LoRAs can, in fact, enhance visual quality. This finding suggests that interference between LoRA modules is not always harmful; in some cases, it may introduce constructive interactions that improve perceptual fidelity. Such an outcome challenges the prevailing assumption that enforcing orthogonality is inherently more advantageous for merging. Instead, our results highlight a more nuanced trade-off: while orthogonality helps eliminate interference at the representational level, it may simultaneously suppress beneficial interactions that contribute to visual quality. This observation motivates a re-examination of the role of orthogonality in adapter merging, suggesting that effective strategies may need to balance interference reduction with the potential gains brought by controlled interactions.

\subsection{User Study: Isolating Orthogonality}

While participants generally preferred higher-quality images, the Dropout vs. Orthogonal comparison in Table~\ref{table:similarity} shows that when image quality is comparable, enforcing strict orthogonality provides no observable benefit in user-perceived concept fidelity.

\subsection{Relationship with Zip-LoRA}

We do not dispute the effectiveness of Zip-LoRA \citep{shah2023ziplora}; however, we note that its performance is more likely attributable to the additional fine-tuning it requires rather than to orthogonality itself. Importantly, Zip-LoRA does not enforce strict orthogonality. Instead, it merely encourages orthogonality by introducing an inner-product penalty into the loss function, which provides no theoretical guarantee of orthogonal representations.

\subsection{What Do People Expect from Orthogonality?}  
Orthogonality is an elegant mathematical property, but its semantic implications are often vaguely defined. For example, should orthogonality lead to better \textbf{disentanglement} of semantics, or to better \textbf{fusion} of semantics? Although these notions lack precise mathematical definitions, disentanglement and fusion are often seen as contradictory. Thus, claims such as ``orthogonality improves disentanglement'' or ``orthogonality improves fusion'' are largely intuitive rather than rigorously grounded. Our findings demonstrate that, in the context of LoRA merging, orthogonality provides no tangible benefit. This suggests that \textbf{the community should adopt a more cautious perspective when attributing semantic advantages to the mathematical property of orthogonality.}


\section{CONCLUSION}  
We introduced Orthogonal Monte Carlo Dropout to enforce strict orthogonality in LoRA merging without extra computational cost. While our theoretical analysis confirms runtime orthogonality and our experiments show the redundancy of LoRAs across both controlled and community settings, the results reveal a clear negative finding: orthogonality alone brings little benefit, as it neither improves semantic disentanglement nor compositionality, and in some cases even reduces image quality compared to direct merging. This suggests that the field should rethink the assumed semantic advantages of orthogonality in adapter merging and explore alternative strategies that balance interference reduction with the preservation of constructive interactions.

\newpage

\bibliography{ref}







\end{document}


%
\runningtitle{I use this title instead because the last one was very long}

%

\onecolumn
\aistatstitle{Instructions for Paper Submissions to AISTATS 2026: \\
Supplementary Materials}

\section{FORMATTING INSTRUCTIONS}

To prepare a supplementary pdf file, we ask the authors to use \texttt{aistats2026.sty} as a style file and to follow the same formatting instructions as in the main paper.
The only difference is that the supplementary material must be in a \emph{single-column} format.
You can use \texttt{supplement.tex} in our starter pack as a starting point, or append the supplementary content to the main paper and split the final PDF into two separate files.

Note that reviewers are under no obligation to examine your supplementary material.

\section{MISSING PROOFS}

The supplementary materials may contain detailed proofs of the results that are missing in the main paper.

\subsection{Proof of Lemma 3}

\textit{In this section, we present the detailed proof of Lemma 3 and then [ ... ]}

\section{ADDITIONAL EXPERIMENTS}

If you have additional experimental results, you may include them in the supplementary materials.

\subsection{Effect of the Regularization Parameter}

\textit{Our algorithm depends on the regularization parameter $\lambda$. Figure 1 below illustrates the effect of this parameter on the performance of our algorithm. As we can see, [ ... ]}

\vfill